\newcommand{\g}[1]{\boldsymbol{#1}}
\newcommand{\I}[1]{\mathbf{1}_{#1}}
\renewcommand{\H}[0]{\mathcal{H}}
\newcommand{\Q}[0]{\mathcal{Q}} 
\newcommand{\QR}[0]{\mathbb{Q}} 
\renewcommand{\O}[0]{\mathcal{O}}
\newcommand{\sign}[0]{\mbox{sign}} 
\newtheorem{theorem}{Theorem}
\newtheorem{problem}{Problem}
\newtheorem{assumption}{Assumption}
\newtheorem{proposition}{Proposition}
\newtheorem{lemma}{Lemma}
\newtheorem{corollary}{Corollary}
\newtheorem{remark}{Remark}
\newcommand{\argmax}{\operatornamewithlimits{argmax}}
\newcommand{\argmin}{\operatornamewithlimits{argmin}}
\begin{document}

\title{\bf On the complexity of switching linear regression} 
\author{Fabien Lauer\medskip\\\small Universit\'e de Lorraine, CNRS, LORIA, UMR 7503, F-54506 Vand\oe{}uvre-l\`es-Nancy, France}
\maketitle
\begin{abstract}
This technical note extends recent results on the computational complexity of globally minimizing the error of piecewise-affine models to the related problem of minimizing the error of switching linear regression models. In particular, we show that, on the one hand the problem is NP-hard, but on the other hand, it admits a polynomial-time algorithm with respect to the number of data points for any fixed data dimension and number of modes. 
\end{abstract}

%%%%%%%%%%%%%%%%%%%%%%%%%%%%%%%%%%%%%%%%%%%%%%%%%%%%%%%%%%%%%%%%%%%%%%%%%%%%%%%
\section{Introduction}
\label{sec:intro}

Hybrid system identification aims at estimating a model of a system switching between different operating modes from input-output data and is typically setup as a piecewise-affine (PWA) or switching regression problem (see \cite{Paoletti07,Garulli12} for an overview). 
The present paper focuses on the issue of deterministically obtaining a global solution to the switching regression problem. 
In particular, we are interested in the rather theoretical question of the existence of an algorithm for this problem with a reasonable (i.e., polynomial) time complexity. Therefore, we will concentrate the discussion on computational complexity issues
 under the classical model of computation known as a Turing machine \cite{Garey79}. In this framework, the time complexity of a problem is the lowest time complexity of an algorithm solving any instance of that problem, where the time complexity of an algorithm is the maximal number of steps occurring in the computation of the corresponding Turing machine program.  

Let $[n] = \{1,\dots,n\}$ denote the set of integers from 1 to $n$, $\g x_i\in\QR^d$ the regression vector of index $i$ (possibly built from lagged inputs and outputs of a dynamical system) and $y_i\in\QR$ the corresponding output. We consider the estimation of the parameters $\{\g w_j\}_{j=1}^n\subset\QR^d$ of an arbitrarily switching linear model $y_i = \g w_{q_i}^T\g x_i + v_i$, where $q_i\in\Q=[n]$ stands for the active mode at index $i$ and $v_i\in\QR$ is a noise term. 
We assume that the mode $q_i$ is independent of $\g x_i$, that a data set $\{(\g x_i, y_i)\}_{i=1}^N\subset \QR^d\times \QR$ of size $N$ significantly larger than the dimension $d$ is available, 
and that the number of modes $n$ is given. 
We concentrate on the most common approach minimizing the prediction error over the variables to be estimated, here the classification of the points into modes, i.e., $\g q\in\Q^N$, and the parameter vectors, $\{\g w_j\}_{j=1}^n$. 
Specifically, we formulate the problem in terms of a loss function $\ell : \QR\rightarrow \QR^+$, 
assumed to be 
computable in polynomial time and to satisfy
\begin{equation}\label{eq:loss}
	\begin{cases}
	\ell(0) = 0,\\
	\forall e\in\QR,\ \ell(-e) = \ell(e),\\	
	\forall (e,e^\prime)\in\QR^2,\ \ell(e) < \ell(e^\prime) \Leftrightarrow |e| < |e^\prime|.
	\end{cases}
\end{equation}
\begin{problem}[Switching linear regression]\label{pb:min}
Given a data set $\{(\g x_i, y_i)\}_{i=1}^N \subset\QR^d\times \QR$ and an integer $n\in[2,N/d]$, find a global solution to 
\begin{equation}
	\min_{\{\g w_j\in\QR^d\}_{j=1}^n, \g q\in\Q^N}\ \frac{1}{N} \sum_{i=1}^N   \ell (y_i- \g w_{q_i}^T\g x_i ).
\end{equation}
\end{problem}
Other equivalent formulations based on mixed-integer programming with binary variables encoding $\g q$ or on continuous optimization can be found in \cite{Paoletti07,Lauer11a} and a number of heuristics subject to local minima \cite{Lauer13a,Lauer14a} or only optimal under specific conditions \cite{Vidal03,Bako11} have been proposed. 

Problem~\ref{pb:min} can be solved explicitly with respect to (wrt.) $\g q$ for fixed $\{\g w_j\}_{j=1}^n$ by assigning each point to the model with minimum error as
\begin{equation}\label{eq:classif}
	q_i \in \argmin_{j\in\{1,\dots,n\}} \ell(y_i - \g w_j^T \g x_i ),\quad i=1,\dots,N.
\end{equation}
Conversely, Problem~\ref{pb:min} can be solved wrt. to the $\g w_j$'s for fixed $\g q$ as $n$ independent linear regression subproblems
\begin{equation}\label{eq:wj}
	\min_{\g w_j \in\QR^{d} }\ \sum_{i\in\{k  : q_k = j\}} \ell(y_i- \g w_j^T \g x_i ),\quad j=1,\dots,n.
\end{equation}
Thus, two global optimization approaches can be readily formulated. 
The first one tests all possible classifications $\g q$ and solves the problem wrt. the $\g w_j$'s for each of them. But, this leads to $n \times n^N$ linear regression subproblems~\eqref{eq:wj} and quickly becomes intractable when $N$ increases. 
The second approach applies a continuous global optimization strategy to directly estimate $\{\g w_j\}_{j=1}^n$ under the optimal classification rule~\eqref{eq:classif}. However, global optimality cannot be guaranteed without constraints  on the $\g w_j$'s such as box bounds. And even so, the complexity remains exponential in the number of variables $nd$, for instance for a grid search to obtain a solution with an error that is only guaranteed to be close to the global optimum in finite time.

These straightforward observations illustrate the difficulty of the problem, which is here quantified more formally. In particular, we prove in Sect.~\ref{sec:nphard} that Problem~\ref{pb:min} is NP-hard. Nonetheless, we also show in Sect.~\ref{sec:exact} that the problem can be solved in polynomial time wrt. the number of data for fixed $n$ and $d$. This result is obtained by generalizing ideas developed for PWA systems in \cite{Lauer15} and by deriving for the first time a clear connection between switching regression and linear classification.

%%%%%%%%%%%%%%%%%%%%%%%%%%%%%%%%%%%%%%%%%%%%%%%%%%%%
\section{NP-hardness}
\label{sec:nphard}
%%%%%%%%%%%%%%%%%%%%%%%%%%%%%%%%%%%%%%%%%%%%%%%%%%%%

In computational complexity, an NP-hard problem is one that is at least as hard as any problem from the class NP of nondeterministic polynomial time decision problems~\cite{Blondel00}. In particular, NP is the class of all decision problems for which a candidate solution can be certified in polynomial time. 
Under this definition, we have the following result.
\begin{theorem}\label{thm:NPhard}
With  $\ell$ as in~\eqref{eq:loss}, Problem~\ref{pb:min} is NP-hard.
\end{theorem}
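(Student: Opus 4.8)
The plan is to establish NP-hardness of the decision problem asking whether the optimal value of Problem~\ref{pb:min} equals $0$; this is the threshold-zero case of the natural decision version of Problem~\ref{pb:min}, so its NP-hardness entails that of Problem~\ref{pb:min}. By~\eqref{eq:loss} one has $\ell(e)=0\Leftrightarrow e=0$, and the objective is nonnegative, so the optimum is $0$ if and only if there is a mode assignment $\g q\in\Q^N$ such that, for every $j\in[n]$, the linear system $\{\g w^T\g x_i=y_i : q_i=j\}$ admits a solution $\g w\in\QR^d$; equivalently, the points $(\g x_i,y_i)$ can be partitioned into at most $n$ subsets each lying on a common graph hyperplane $\{(\g x,y):y=\g w^T\g x\}$ of $\QR^d\times\QR$. (This is in NP: $\g q$ is a certificate, the $n$ induced linear systems being checkable in polynomial time by Gaussian elimination; but only hardness is needed.) I will reduce to this question the NP-complete problem of covering $m$ given points of $\QR^2$ by $k$ lines (Megiddo and Tamir), which is hard already in the plane.

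A model $y=w_1x_1+w_2$ represents precisely the \emph{non-vertical} lines of the $(x_1,y)$-plane, whereas an optimal cover of a planar point set by $k$ lines may require vertical lines; bridging this gap is the only delicate point of the reduction. Given $m$ points $\g p_1,\dots,\g p_m\in\QR^2$, assumed distinct without loss of generality, and $k\ge 2$ (the case $k=1$ being decided directly by a collinearity test), I first apply the shear $\phi_\lambda:(z_1,z_2)\mapsto(z_1+\lambda z_2,z_2)$ with a rational $\lambda$ chosen so that the images $\g p_i'=\phi_\lambda(\g p_i)$ have pairwise distinct first coordinates. The values of $\lambda$ for which this fails are the at most $\binom m2$ solutions of $p_{i,1}+\lambda p_{i,2}=p_{j,1}+\lambda p_{j,2}$, $i\ne j$, so a valid $\lambda$ is found in $\{1,\dots,\binom m2+1\}$ in polynomial time. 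Being an affine bijection, $\phi_\lambda$ maps lines to lines and preserves coverability by $k$ lines; and since now no vertical line contains two of the $\g p_i'$, any cover of $\{\g p_i'\}$ by $k$ lines becomes one by $k$ non-vertical lines upon replacing each vertical line with an arbitrary non-vertical line through its at most one covered point. Hence $\{\g p_i\}$ is coverable by $k$ lines if and only if $\{\g p_i'\}$ is coverable by $k$ non-vertical lines.

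I then output the instance of Problem~\ref{pb:min} with $d=2$, $n=k$, and data $\g x_i=(p_{i,1}',1)^T$, $y_i=p_{i,2}'$ for $i\in[m]$, each point repeated $2k$ times so that $N=2km$ and $n=k\le km=N/d$ as required. A parameter vector $\g w=(w_1,w_2)^T$ satisfies $\g w^T\g x_i=y_i$ exactly when $\g p_i'$ lies on the non-vertical line $\{z_2=w_1z_1+w_2\}$; therefore $\{\g p_i'\}$ is covered by $k$ non-vertical lines if and only if there exist $\g w_1,\dots,\g w_k$ and a mode assignment $\g q$ with $\g w_{q_i}^T\g x_i=y_i$ for all copies of all $i$, i.e. with zero objective value, i.e. with optimum $0$. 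Composing with the equivalence of the previous paragraph yields correctness of the reduction, which is plainly computable in polynomial time.

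I expect the vertical-line obstruction to be the crux: it is exactly what separates Problem~\ref{pb:min}, whose models are graph hyperplanes, from the general line-covering problem, and the generic shear is the device that removes it. The remaining ingredients — recasting the zero-optimum condition as a feasibility question and padding the instance to satisfy $n\le N/d$ — are routine.
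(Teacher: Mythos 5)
Your proof is correct, but it follows a genuinely different route from the paper. The paper reduces from the Partition problem, building an instance with $n=2$ modes, dimension $d$ equal to the number of integers in the Partition instance, $N=2d+1$ points and threshold $\epsilon=0$; you instead reduce from covering planar points by $k$ lines (Megiddo--Tamir), producing instances with fixed dimension $d=2$ and $n=k$ taken from the input. Both arguments use the same first step (by~\eqref{eq:loss}, zero objective value is equivalent to an exact fit, i.e., to partitioning the data among at most $n$ graph hyperplanes $y=\g w^T\g x$, and hardness of this threshold-zero decision problem transfers to Problem~\ref{pb:min}), and your handling of the vertical-line obstruction via a generic rational shear, as well as the padding to enforce $n\le N/d$, is sound and polynomial. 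The difference in what each reduction buys is worth noting: the paper's construction shows NP-hardness even when the number of modes is fixed at $n=2$ (with $d$ growing), which is exactly what its concluding remark invokes to argue that a polynomial-in-$d$ algorithm is unlikely even for two modes; your construction instead shows NP-hardness even when the dimension is fixed at $d=2$ (with $n$ growing), a complementary parameterization that the paper's reduction does not cover. Both are consistent with Theorem~\ref{thm:poly}, which requires both $d$ and $n$ fixed. Two minor remarks: your argument relies on an external NP-completeness result (point-line cover) rather than the more elementary Partition problem cited in the paper, and, unlike the paper's Remark~1, it does not directly adapt to instances with $\epsilon>0$; neither affects the validity of your proof of the stated theorem.
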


The proof is a direct consequence of the NP-completeness of the following decision form of Problem~\ref{pb:min}, where an NP-complete problem is one that is both NP-hard and in NP.
\begin{problem}[Decision form of switching regression]\label{pb:decision}
Given a data set $\{(\g x_i, y_i)\}_{i=1}^N \in( \QR^d\times \QR)^N$, an integer $n\in[2,N/d]$ and a threshold $\epsilon\geq 0$, decide whether there is a set of vectors $\{\g w_j\}_{j=1}^n\subset \QR^d$ and a labeling $\g q\in\Q^N$ such that
\begin{equation}\label{eq:decision}
	\frac{1}{N} \sum_{i=1}^N   \ell (y_i- \g w_{q_i}^T\g x_i )\leq \epsilon .
\end{equation}
\end{problem}
We show the completeness of Problem~\ref{pb:decision} by a reduction from the partition problem, known to be NP-complete~\cite{Garey79}.
\begin{problem}[Partition]\label{pb:partition}
Given a multiset (a set with possibly multiple instances of its elements) of $d$ positive integers, $S= \{s_1,\dots,s_d\}$, decide  
whether there is a multisubset $S_1\subset S$ such that
$$
	\sum_{s_i \in S_1} s_i = \sum_{s_i\in S\setminus S_1} s_i .
$$
\end{problem}

\begin{proposition}\label{prop:npcomplete}
Problem~\ref{pb:decision} is NP-complete.
\end{proposition}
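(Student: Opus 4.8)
The plan is to establish the two components of NP-completeness separately: membership in NP and NP-hardness via reduction from \textsc{Partition}. Membership in NP is the easy direction: given a candidate certificate consisting of the vectors $\{\g w_j\}_{j=1}^n$ and the labeling $\g q\in\Q^N$, one evaluates the sum in~\eqref{eq:decision} in polynomial time (each $\ell$ is computable in polynomial time by assumption, and there are $N$ terms), then compares against $\epsilon$. The only subtlety is the usual one for numerical problems over $\QR$ under the Turing model: one must argue that an optimal (or $\epsilon$-feasible) choice of the $\g w_j$'s admits a representation of polynomial bit-size. Since for fixed $\g q$ the optimal $\g w_j$ solves the convex subproblem~\eqref{eq:wj} --- and in fact we will construct the reduction so that the relevant instances have $\g w_j$ as solutions of small linear systems with rational data --- this is routine, so I would state it briefly and move on.

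The heart of the argument is the reduction. Given a \textsc{Partition} instance $S=\{s_1,\dots,s_d\}$, I would build a switching regression instance in dimension $d$ with $n=2$ modes. The idea is to make each coordinate of $\g x_i$ carry one integer $s_i$, so that the choice of mode for a data point forces a sign pattern, and the equation $y_i = \g w_{q_i}^T\g x_i$ can be satisfied with zero error (set $\epsilon = 0$) if and only if the two weight vectors together encode a valid partition. Concretely, I expect to use $2d$ data points: for each $k\in[d]$ a point whose regression vector is (a scaled version of) the $k$-th standard basis vector, paired appropriately, so that mode assignment of that point selects whether index $k$ lands in $S_1$ or in $S\setminus S_1$, together with one or two ``global'' data points whose outputs equal $\tfrac12\sum_i s_i$ (or $\sum_i s_i$) and whose regression vectors are the all-ones vector scaled by the $s_i$. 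Fitting such a global point with zero error by, say, mode~$1$ forces $\sum_{k\in S_1} s_k$ (the coordinates of $\g w_1$ ``switched on'' by the per-coordinate points) to equal the target $\tfrac12\sum_i s_i$, which is exactly the partition condition. The design must ensure both directions: a valid partition yields a zero-error fit, and conversely any zero-error fit (necessarily achievable only by a consistent, hence partition-inducing, assignment) yields a valid partition. I would also double-check the side condition $n\in[2,N/d]$: with $n=2$ and $N$ on the order of $2d$ this is satisfied.

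The main obstacle I anticipate is making the ``only if'' direction airtight: I must rule out ``cheating'' fits where the two regression vectors exploit the freedom in $\R^d$ to satisfy all equations without corresponding to any genuine two-way partition of $S$ --- for instance by using the same mode for everything, or by splitting a single $s_k$ fractionally across modes in a way that still sums correctly. Preventing this requires careful choice of the regression vectors (e.g.\ using axis-aligned vectors so that each per-coordinate equation isolates a single component of the active $\g w_j$, and adding enough anchor points to pin down the ``other'' components to $0$), and possibly replicating certain data points to overdetermine the system. This is the delicate part of the construction; once the instance is pinned down, the equivalence ``\eqref{eq:decision} holds with $\epsilon=0$ $\iff$ \textsc{Partition} is a yes-instance'' should follow by direct verification, and since the reduction is clearly polynomial-time, NP-hardness --- and with the membership argument, NP-completeness --- follows. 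Theorem~\ref{thm:NPhard} is then immediate, since an efficient algorithm for the optimization Problem~\ref{pb:min} would decide Problem~\ref{pb:decision} by comparing the optimal value with $\epsilon$.
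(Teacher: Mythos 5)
Your plan is essentially the paper's proof: membership in NP by direct verification, and a reduction from \textsc{Partition} with $n=2$, $\epsilon=0$, per-coordinate points built from $s_k\g e_k$ plus a global point with regression vector $\g s=\sum_k s_k\g e_k$ and output $\tfrac12\sum_k s_k$, with $N$ of order $2d$. The one step you leave open --- ruling out ``cheating'' zero-error fits --- is closed in the paper exactly by the replication you suggest: each axis point appears twice, as $(s_k\g e_k, s_k)$ and $(s_k\g e_k, 0)$, so that (since $s_k\neq 0$) whichever mode fits the first copy must have $k$-th weight equal to $1$, forcing the other mode to fit the zero-output copy with $k$-th weight $0$; hence $w_{1k}\in\{0,1\}$ and $w_{2k}=1-w_{1k}$ for all $k$, and the global point then makes $\{s_k : w_{1k}=1\}$ (or its complement) a valid partition, completing the equivalence.
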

\begin{proof}
Since given a candidate solution $\left(\{\g w_j\}_{j=1}^n, \g q\right)$ the condition~\eqref{eq:decision} can be verified in polynomial time, Problem~\ref{pb:decision} is in NP. Then, the proof of its NP-completeness proceeds by showing that the Partition Problem~\ref{pb:partition} has an affirmative answer if and only if a particular instance of Problem~\ref{pb:decision} has an affirmative answer. 

Given an instance of Problem~\ref{pb:partition}, build an instance of Problem~\ref{pb:decision} with $n=2$, $N=2d+1$, $\epsilon=0$ and a data set such that
$$
	(\g x_i, y_i) = \begin{cases}
		(s_i \g e_i,\ s_i) ,& \mbox{if } 1\leq i \leq d \\
		(s_{i-d} \g e_{i-d},\ 0) ,& \mbox{if } d< i \leq 2d \\
		\left(\g s = \sum_{k=1}^d s_k \g e_k\ ,\ \frac{1}{2}\sum_{k=1}^{d} s_k\right), & \mbox{if } i=2d+1,
	\end{cases}
$$
where $\g e_k$ is the $k$th unit vector of the canonical basis for $\QR^d$. 
If Problem~\ref{pb:partition} has an affirmative answer, let $I_1$ be the set of indexes of the elements of $S$ in $S_1$ and $I_2$ the set of indexes of the elements of $S$ not in $S_1$. Then we can set $\g w_1 = \sum_{i\in I_1} \g e_i$ and $\g w_2 = \sum_{i\in I_2} \g e_i$, which gives 
$$
	\g w_1^T\g x_i = \begin{cases}
		s_i = y_i ,& \mbox{if } i\leq d \mbox{ and } i\in I_1  \\
		0 ,& \mbox{if } i\leq d \mbox{ and }  i\in I_2 \\
		s_{i-d} = y_i ,& \mbox{if } i > d \mbox{ and } i-d \in I_1  \\		
		0 ,& \mbox{if } i>d \mbox{ and } i-d \in I_2 \\
		\sum_{k\in I_1} s_k = \frac{1}{2}\sum_{k=1}^{d} s_k =y_i, & \mbox{if } i=2d+1
	\end{cases}
$$
and
$$
	\g w_2^T \g x_i  = \begin{cases}
		0 ,& \mbox{if }  i\leq d \mbox{ and } i\in I_1 \\
		s_i = y_i ,& \mbox{if } i \leq d \mbox{ and } i\in I_2 \\
		0 ,& \mbox{if }  i > d \mbox{ and } i-d \in I_1  \\		
		s_{i-d} = y_i ,& \mbox{if } i > d \mbox{ and } i-d \in I_2 \\
		\sum_{k\in I_2} s_k =\frac{1}{2}\sum_{k=1}^{d} s_k = y_i, & \mbox{if } i=2d+1.
	\end{cases}
$$
Therefore, for all points, either $\g w_1^T\g x_i = y_i$ or $\g w_2^T\g x_i = y_i$, and~\eqref{eq:decision} holds with $\g q$ set as in~\eqref{eq:classif}, yielding an affirmative answer for Problem~\ref{pb:decision}. 

Assume now that Problem~\ref{pb:decision} has an affirmative answer with some $\{\g w_j\}_{j=1}^n$. Then, with $\epsilon=0$, the positivity of the loss function implies $\ell (y_i- \g w_{q_i}^T\g x_i ) = 0$, $i=1,\dots,N$, which, by~\eqref{eq:loss}  
yields
\begin{equation}\label{eq:perfectfit}
	\g w_1^T\g x_i  = y_i \quad \mbox{or}\quad  \g w_2^T \g x_i = y_i,\quad i=1,\dots,2d+1.
\end{equation}
We can always assume that $s_i \neq 0$, since otherwise $s_i$ can be removed from the problem statement. Under this assumption, if $\g w_1^T\g x_i  = y_i$ for some $i\leq d$, then $w_{1i} = 1$ and $\g w_1^T\g x_{d+i}  = s_i \neq y_{d+i}$, which further implies $\g w_2^T\g x_{d+i}  = y_{d+i} = 0$ and $w_{2i} = 0$. Conversely, if  $\g w_2^T\g x_i  = y_i$ for some $i\leq d$, then $w_{2i} = 1$ and $w_{1i} = 0$. 
Therefore,~\eqref{eq:perfectfit} leads to $w_{1i} \in \{0,1\}$ and $w_{2i} = 1 - w_{1i}$, $i=1,\dots,d$. 
In addition, recall that $y_{2d+1} = \frac{1}{2}\sum_{k=1}^{d} s_k$, such that, for $i=2d+1$,~\eqref{eq:perfectfit} yields at least one of the two equalities
\begin{align*}
	\g w_1^T \g x_{2d+1} = \sum_{k\in\{ i\leq d : w_{1i} = 1\} } s_k = \frac{1}{2}\sum_{k=1}^{d} s_k\\
	\g w_2^T \g x_{2d+1} = \sum_{k\in\{ i\leq d : w_{1i} = 0\} } s_k = \frac{1}{2}\sum_{k=1}^{d} s_k,
\end{align*}
and a partition corresponding to an affirmative answer for Problem~\ref{pb:partition} is given by $S_1 = \{s_i : w_{1i} = 1,\ i \leq d\}$. 
\end{proof} 

\begin{remark}
The proof of Proposition~\ref{prop:npcomplete} involves an instance of Problem~\ref{pb:decision} with $\epsilon=0$, i.e., a noiseless switching regression problem. However, it can be adapted to the restriction of Problem~\ref{pb:decision} to instances with $\epsilon >0$, implying the NP-hardness of Problem~\ref{pb:min} even when excluding noiseless instances.
\end{remark}

%%%%%%%%%%%%%%%%%%%%%%%%%%%%%%%%%%%%%%%%%%%%%%%%%%%%
\section{Polynomial time complexity wrt. $N$}
\label{sec:exact}
%%%%%%%%%%%%%%%%%%%%%%%%%%%%%%%%%%%%%%%%%%%%%%%%%%%%

We now turn to the analysis of the computational complexity of Problem~\ref{pb:min}  wrt. the number of data $N$, i.e., for fixed $n$ and data dimension $d$, under the following assumptions. 
\begin{assumption}\label{ass:generalposition}
The points $\{\g x_i\}_{i=1}^N$ are in general position, i.e., no hyperplane of $\QR^d$ contains more than $d$ points. Furthermore, the points $\g z_i=[\g x_i^T, y_i]^T$ are also in general position in $\QR^{d+1}$.
\end{assumption}
\begin{assumption}\label{ass:subpb}
Given $\{(\g x_i, y_i)\}_{i=1}^N \in (\QR^d\times\QR)^N$, the problem $\min_{\g w\in\QR^{d} } \sum_{i=1}^N \ell (y_i - \g w^T \g x_i)$ has a polynomial time complexity $T(N)$ for any fixed integer $d\geq 1$.
\end{assumption}
\begin{theorem}\label{thm:poly}
Under Assumptions~\ref{ass:generalposition}--\ref{ass:subpb}, for given integers $d$ and $n$, the time complexity of Problem~\ref{pb:min} is no more than polynomial in the number of data $N$ and in the order of  $T(N)\O(N^{2dn(n-1)})$.
\end{theorem}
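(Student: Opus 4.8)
The plan is to reduce the continuous optimization over the $\g w_j$'s to a finite enumeration, exactly as is done for PWA regression in \cite{Lauer15}, but with the enumeration driven by the combinatorial structure of how the $n$ hyperplanes partition the data. The key observation is that once we fix, for each \emph{ordered pair} of distinct modes $(j,k)$, which side of the ``boundary'' each point falls on — i.e. the sign of $\ell(y_i - \g w_j^T\g x_i) - \ell(y_i - \g w_k^T\g x_i)$, which by~\eqref{eq:loss} is the sign of $|y_i - \g w_j^T\g x_i| - |y_i - \g w_k^T\g x_i|$ — the optimal classification~\eqref{eq:classif} is determined, and conversely the problem decouples into the $n$ independent linear regressions~\eqref{eq:wj}, each solvable in time $T(N)$ by Assumption~\ref{ass:subpb}. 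So the whole algorithm is: enumerate all \emph{achievable} sign patterns, and for each one solve $n$ regression subproblems and evaluate the objective; return the best. The complexity is then (number of achievable patterns) $\times\, n\,T(N)$, and everything hinges on bounding the number of patterns.

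The main step — and the main obstacle — is the counting. For a single ordered pair $(j,k)$, the relevant dichotomy of the $N$ points is induced by the condition $|y_i - \g w_j^T\g x_i| \gtrless |y_i - \g w_k^T\g x_i|$, i.e. $(y_i - \g w_j^T\g x_i)^2 \gtrless (y_i - \g w_k^T\g x_i)^2$. Expanding, this is a condition of the form $\inner{\g u_{jk}}{\phi(\g z_i)} \gtrless 0$ where $\g u_{jk}$ depends (linearly) on $\g w_j, \g w_k$ and $\phi(\g z_i)$ is a fixed feature vector built from $\g x_i$ and $y_i$; crucially, after cancelling the $y_i^2$ terms this is in fact \emph{linear} in $\g x_i$ augmented with a constant — the quadratic terms $(\g w_j^T\g x_i)^2$ and $(\g w_k^T\g x_i)^2$ do \emph{not} cancel in general, so one gets a genuinely quadratic-in-$\g x_i$ form, but with a coefficient matrix of rank at most $2d$ (it is $\g w_k\g w_k^T - \g w_j\g w_j^T$ plus lower-order terms), so the induced dichotomies live in a space of VC-dimension $O(d)$. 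This is exactly where the ``clear connection between switching regression and linear classification'' advertised in the introduction enters: the number of distinct labelings of $N$ points in general position (Assumption~\ref{ass:generalposition}) realizable by such a family of classifiers is $O(N^{2d})$ by the Sauer–Shelah lemma (or by direct arrangement counting — $O(N^{2d})$ cells).

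From there the plan is straightforward bookkeeping. There are $n(n-1)$ ordered pairs (or $\binom n2$ unordered ones — I will use the form that matches the stated exponent $2dn(n-1)$, namely treat each of the $n(n-1)$ pairs and take the product of the per-pair bounds), giving at most $\O\!\left(N^{2d}\right)^{n(n-1)} = \O\!\left(N^{2dn(n-1)}\right)$ jointly realizable sign patterns. For each pattern, $q_i$ is read off by~\eqref{eq:classif} — a point is assigned to the unique mode that ``wins'' all its pairwise comparisons (here I will need a short argument that the pairwise signs are consistent enough to define a winner, or simply that among the finitely many $\g w$-configurations realizing the pattern the optimal one yields a well-defined $\argmin$; ties are a measure-zero event excluded by general position, or can be broken arbitrarily without affecting optimality) — and then the $n$ subproblems~\eqref{eq:wj} are solved at cost $n\,T(N)$. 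Evaluating the objective costs $\O(N)$ since $\ell$ is polynomial-time. Multiplying, the total is $T(N)\,\O\!\left(N^{2dn(n-1)}\right)$, which is polynomial in $N$ for fixed $n,d$, as claimed. The one technical point to get right is that we are enumerating \emph{realizable} patterns only — we must be able to enumerate them (e.g. by enumerating the cells of the relevant arrangement, each represented by one of its defining $d$- or $2d$-subsets of points), rather than blindly looping over all $2^{Nn(n-1)}$ sign vectors, which would ruin the bound; this is the place where Assumption~\ref{ass:generalposition} and the combinatorial-geometry argument really do the work.
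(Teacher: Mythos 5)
Your overall strategy --- enumerate the realizable sign patterns of the pairwise comparisons $|y_i-\g w_j^T\g x_i| \gtrless |y_i-\g w_k^T\g x_i|$ and, for each induced classification, solve the $n$ subproblems~\eqref{eq:wj} at cost $nT(N)$ --- is exactly the paper's (Proposition~\ref{prop:equivclassif}, Theorem~\ref{thm:minclassif}, Corollary~\ref{col:exact}). The genuine gap is in the counting step, which is where all the work is. The paper's crux is that the difference of squared errors factors exactly: $(y_i-\g w_j^T\g x_i)^2-(y_i-\g w_k^T\g x_i)^2 = -2\,(y_i-\overline{\g w}_{jk}^T\g x_i)\,(\tilde{\g w}_{jk}^T\g x_i)$ with $\overline{\g w}_{jk}=(\g w_j+\g w_k)/2$ and $\tilde{\g w}_{jk}=\g w_j-\g w_k$, so each pairwise comparison is the product of one homogeneous linear classifier acting on $\g z_i=[\g x_i^T,y_i]^T\in\QR^{d+1}$ and one acting on $\g x_i\in\QR^d$; the per-pair labelings are then both bounded and \emph{constructively enumerated} via the adaptation of \cite{Lauer15} (Proposition~\ref{prop:enum}), giving $\O(N^{2d-1})$ per pair. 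You explicitly consider this cancellation and dismiss it (``the quadratic terms do not cancel''), replacing it by the assertion that the quadratic-in-$\g x_i$ family has VC-dimension $\O(d)$ because its coefficient matrix has ``rank at most $2d$''. That justification does not hold: the matrix $\g w_j\g w_j^T-\g w_k\g w_k^T$ has rank at most $2$, rank does not by itself bound the VC dimension of the parametrized family, and the linear-in-parameters Sauer--Shelah argument you invoke lives in the $\O(d^2)$-dimensional feature space of monomials $x_mx_{m'}$ and $yx_m$, which yields a per-pair exponent of order $d^2$ --- still polynomial in $N$, but not the stated order $T(N)\O(N^{2dn(n-1)})$. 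The factorization (equivalently, the observation that both factors are affine in $(\g w_j,\g w_k)$, so the relevant parameter-space arrangement consists of $2N$ hyperplanes in $\QR^{2d}$) is what actually delivers the $\O(N^{2d})$-type per-pair bound and, just as importantly, an enumeration algorithm; Sauer--Shelah alone gives a count, not an enumeration.

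A second gap concerns ties. Your claim that ties ``are a measure-zero event excluded by general position'' is false: Assumption~\ref{ass:generalposition} does not prevent the optimal parameters from producing points with $|y_i-\g w_j^T\g x_i|=|y_i-\g w_k^T\g x_i|$; it only bounds how many such points there can be (Lemma~\ref{lem:setE}: $|E|\le(2d+1)n(n-1)/2$). The paper then enumerates the $n^{|E|}$ possible relabelings of these points, and this contribution, added to the exponent $(2d-1)n(n-1)/2$ of the base classifications, is precisely how $2dn(n-1)$ arises; your bookkeeping reaches the same number only via the looser route of $n(n-1)$ ordered pairs at $\O(N^{2d})$ each. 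Your fallback --- break ties arbitrarily, since re-solving~\eqref{eq:wj} for the resulting $\g q$ cannot cost more than the optimum attained by the optimal $\g w_j$'s --- can be made rigorous for the purpose of finding one global minimizer (for instance by arguing that a strict-sign pattern realized by a small generic perturbation of the optimal parameters is enumerated and classifies every tied point into a mode attaining the minimal error), but as written this is asserted, not proved, and it is needed for the correctness of your algorithm.
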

Theorem~\ref{thm:poly} is a direct consequence of the existence of an exact algorithm that solves the problem in polynomial time, ensured by Corollary~\ref{col:exact} at the end of this section. This algorithm relies on the enumeration of all classifications consistent with~\eqref{eq:classif}, which we prove to be in a number polynomial in $N$ below.

For this, we will use results on the enumeration of all possible {\em linear} classifications of a set of $N$ points. In the binary case with two categories, a linear classification is one that can be produced by a separating hyperplane dividing the space in two halfspaces. It is shown in \cite{Lauer15} that the number of different linear classifications of $N$ points is on the order of $\O(N^d)$ in $\QR^d$ and that these can be constructed efficiently.    
Here, we use an adaptation of these results for linear classifiers, while \cite{Lauer15} focused on affine classifiers. This minor difference amounts to the removal of a degree of freedom by forcing the hyperplane to pass through the origin. Since the results of \cite{Lauer15} are based on hyperplanes passing through sets of $d$ points, they can be directly extended to linear classifiers by choosing one of these points to be the origin. Thus, we state the following without proof. 

\begin{proposition}[Adapted from Theorem~3 in \cite{Lauer15}]\label{prop:enum}
Let
$$
	\H_S=\left\{ \g b \in \{-1,+1\}^N : b_i= \sign(\g h^T \g x_i),\ i\in[N],\ \g h\in\QR^d \right\}
$$
denote the set of binary linear classifications of $N$ points $S=\{\g x_i\}_{i=1}^N \subset \QR^d$. 
Then, for any $N>d$, its cardinality is bounded as  
$
	\sup_{S\in(\QR^d)^N} |\H_S| \leq 2^{d} \binom{N}{d-1}
$
and, for any set $S$ of $N$ points in general position, there is an algorithm that builds $\H_S$ 
in $\O\left( 2^{d}\binom{N}{d-1}\right)$ iterations.
\end{proposition}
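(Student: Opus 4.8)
The plan is to transfer Theorem~3 of \cite{Lauer15} from affine to linear classifiers by lowering the ambient dimension by one, using the fact that a homogeneous classifier of $S=\{\g x_i\}_{i=1}^N\subset\QR^d$ is an affine classifier of $N$ points in $\QR^{d-1}$. First I would discard any $\g x_i=\g{0}$ (it contributes a constant coordinate) and apply a rotation $R$ so that each $R\g x_i$ has nonzero last coordinate; since $(R\g h)^T(R\g x_i)=\g h^T\g x_i$, this changes neither $\H_S$ (via the bijection $\g h\leftrightarrow R\g h$) nor the general-position hypothesis. Writing each rotated point as $R\g x_i=(\g x_i',t_i)$ with $t_i\neq 0$ and $\g h=(\g h',h_d)$, one has $\sign(\g h^T(R\g x_i))=\sign(t_i)\,\sign(\g h'^T(\g x_i'/t_i)+h_d)$ with the factors $\sign(t_i)$ fixed, so $\H_S$ coincides, up to the fixed coordinatewise sign flip $(\sign(t_i))_i$, with the set of affine classifications of the $N$ points $\g x_i'/t_i\in\QR^{d-1}$; Theorem~3 of \cite{Lauer15} applied in dimension $d-1$ then gives both the cardinality bound and the enumeration, which is what produces the exponent $d-1$ and the factor $2^{d}$.

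More constructively, and closer to the method of \cite{Lauer15}, I would argue directly in the parameter space. Attach to $\g x_i$ the central hyperplane $H_i=\{\g h\in\QR^d:\g h^T\g x_i=0\}$; on each chamber of $\QR^d\setminus\bigcup_i H_i$ the vector $(\sign(\g h^T\g x_i))_i$ is constant with entries in $\{-1,+1\}$, and $\g h$'s in distinct chambers differ in at least one coordinate, so $\H_S$ is in bijection with the set of chambers. When $N>d$ and $S$ is in (linear) general position, the closure of each chamber is a pointed polyhedral cone, hence generated by its extreme rays, and every extreme ray is $\R_+\g h_J$ where $\g h_J$ is, up to sign, the unit normal of the hyperplane through the origin spanned by a subset $J\subseteq[N]$ with $|J|=d-1$, with $\g h_J^T\g x_i\neq 0$ for $i\notin J$. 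Conversely, perturbing $\pm\g h_J\mapsto\pm\g h_J+\eps\g\delta$ keeps the signs on $[N]\setminus J$ at $\pm\sign(\g h_J^T\g x_i)$ while, over the choice of $\g\delta$ (the map $\g\delta\mapsto(\g\delta^T\g x_i)_{i\in J}$ being onto $\QR^{d-1}$), it realizes every sign pattern on $J$; hence each $J$ accounts for at most $2\cdot 2^{d-1}=2^{d}$ chambers. Since every chamber is reached from some $J$, this yields $\sup_S|\H_S|\leq 2^{d}\binom{N}{d-1}$ and, simultaneously, the algorithm: loop over the $\binom{N}{d-1}$ subsets $J$ of size $d-1$, solve a linear system of rank $d-1$ for $\g h_J$, evaluate the fixed signs $\sign(\g h_J^T\g x_i)$ for $i\notin J$, and output the $\leq 2^{d}$ vectors obtained from these signs, their negation, and all $2^{d-1}$ patterns on $J$, removing duplicates; the number of iterations is $\O(2^{d}\binom{N}{d-1})$ and the algorithm's completeness is exactly the ``every chamber is reached from some $J$'' claim.

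The step I expect to be the crux is the general-position bookkeeping behind that claim: one has to guarantee that every $(d-1)$-subset of the $\g x_i$ spans a hyperplane through the origin carrying no further datum, i.e.\ that no hyperplane through the origin contains more than $d-1$ of the points, equivalently that the origin together with any $d-1$ of the $\g x_i$ are affinely independent --- which is the reading of Assumption~\ref{ass:generalposition} relevant here once the origin is counted as a mandatory incidence point of every candidate hyperplane. One must also check that the infinitesimal perturbation realizes \emph{exactly} the claimed $2^{d}$ local sign vectors and no neighbouring ones, and that duplicate removal handles distinct subsets $J$ whose rays bound a common chamber. Each of these is a direct transcription of the matching step in the proof of Theorem~3 of \cite{Lauer15}, with the origin adjoined to every hyperplane, which is exactly why the proposition can be quoted with only this change of constants.
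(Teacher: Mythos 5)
Your proposal is correct, and your second (parameter-space) argument is in substance the paper's own route: the paper states this proposition \emph{without proof}, justifying it in one sentence by noting that the construction behind Theorem~3 of \cite{Lauer15} enumerates hyperplanes through $d$ points and can be specialized to linear classifiers by pinning one of those points at the origin --- which is exactly your enumeration of central hyperplanes spanned by $(d-1)$-subsets $J$, with the $2\cdot 2^{d-1}$ local sign patterns per $J$ and the ``every chamber touches such a ray'' completeness step. Your first argument is a genuinely different and arguably cleaner route: after a rotation making all last coordinates nonzero, the projective normalization $\g x_i \mapsto \g x_i'/t_i$ turns homogeneous classification in $\QR^d$ into affine classification in $\QR^{d-1}$ (up to a fixed coordinatewise sign flip), so Theorem~3 of \cite{Lauer15} can be invoked as a black box in dimension $d-1$; this reproduces the constants $2^{d}\binom{N}{d-1}$, transfers the enumeration algorithm, and has the side benefit of giving the cardinality bound for \emph{arbitrary} $S$ (the chamber argument as you wrote it only covers general-position $S$, and extending it to the supremum needs the standard remark that degenerate configurations realize no more sign vectors). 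You also correctly identify the one genuine subtlety, which the paper glosses over: the general position needed here is that of $S\cup\{\g 0\}$, i.e.\ no hyperplane through the origin contains more than $d-1$ of the points (and, for the application to the $\g z_i$, the analogous condition in $\QR^{d+1}$), which is the intended reading of Assumption~\ref{ass:generalposition} when the origin is counted as a mandatory incidence point; your bookkeeping there matches what the paper's citation of \cite{Lauer15} implicitly assumes.
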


In PWA regression, the modes are typically assumed to be linearly separable in the regression space $\QR^d$ and results in the flavor of Proposition~\ref{prop:enum} can readily be applied to find the optimal classification of the data points \cite{Lauer15}. 
In switching regression, the mode sequence $\{q_i\}$ is arbitrary and we cannot assume the modes to be linearly separable.  
However, the groups of data pairs $(\g x_i, y_i)$ associated to different linear models can be ``linearly separated" in some sense. 
More precisely, we will show that the classification rule \eqref{eq:classif} implicitly entails a combination of two linear classifiers: one applying to the points $\g z_i = [\g x_i^T, y_i]^T$ in $\QR^{d+1}$ and another one applying to the regression vectors $\g x_i$ in $\QR^d$. The equivalence between~\eqref{eq:classif} and these linear classifiers will hold for all points with index not in 
\begin{align}\label{eq:setE}
E = \{ i \in [N] :&\ \exists (j,k) \in \Q^2,\ j\neq k, |y_i-{\g w_j}^T \g x_i|=|y_i-{\g w_k}^T \g x_i| \}, 
\end{align}
whose cardinality is bounded by the following lemma. 
\begin{lemma}\label{lem:setE}
Let $E$ be defined as in~\eqref{eq:setE}. Under Assumption~\ref{ass:generalposition}, $|E| \leq (2d+1)n(n-1)/2$.
\end{lemma}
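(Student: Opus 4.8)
The plan is to bound $|E|$ by decomposing it according to which pair of modes $(j,k)$ witnesses the tie. For each unordered pair $\{j,k\}$ with $j\neq k$, let
\[
E_{jk} = \{ i \in [N] :\ |y_i - \g w_j^T\g x_i| = |y_i - \g w_k^T\g x_i| \},
\]
so that $E = \bigcup_{\{j,k\}} E_{jk}$ and hence $|E| \leq \sum_{\{j,k\}} |E_{jk}|$, a sum of $n(n-1)/2$ terms. It therefore suffices to show $|E_{jk}| \leq 2d+1$ for every pair, and the claimed bound follows immediately.

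To bound a single $|E_{jk}|$, I would observe that the equality $|y_i - \g w_j^T\g x_i| = |y_i - \g w_k^T\g x_i|$ holds if and only if one of the two linear equations
\[
(\g w_j - \g w_k)^T\g x_i = 0 \qquad\text{or}\qquad (\g w_j + \g w_k)^T\g x_i = 2y_i
\]
holds (removing the absolute values: $a=b$ or $a=-b$ where $a = y_i - \g w_j^T\g x_i$ and $b = y_i - \g w_k^T\g x_i$). The first condition says $\g x_i$ lies on the hyperplane $\{\g x : (\g w_j-\g w_k)^T\g x = 0\}$ of $\QR^d$; if $\g w_j \neq \g w_k$ this is a genuine hyperplane, and by the first half of Assumption~\ref{ass:generalposition} it contains at most $d$ of the points $\g x_i$ (if $\g w_j = \g w_k$ the pair contributes nothing meaningful to $E$ and can be handled separately, or the whole $E_{jk}$ is empty after noting the two modes are identical). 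The second condition says the augmented point $\g z_i = [\g x_i^T, y_i]^T$ lies on the hyperplane $\{[\g x^T,y]^T : (\g w_j+\g w_k)^T\g x - 2y = 0\}$ of $\QR^{d+1}$, whose normal vector $[(\g w_j+\g w_k)^T, -2]^T$ is nonzero because of the $-2$ entry; by the second half of Assumption~\ref{ass:generalposition}, this hyperplane contains at most $d+1$ of the points $\g z_i$. Adding the two counts gives $|E_{jk}| \leq d + (d+1) = 2d+1$.

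Combining, $|E| \leq \sum_{\{j,k\}} |E_{jk}| \leq (2d+1)\,n(n-1)/2$, as claimed.

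The only delicate point — and the one I would be careful to spell out — is the degenerate case $\g w_j = \g w_k$: then $\g w_j - \g w_k = \g 0$ and the first "hyperplane" is all of $\QR^d$, so the bound of $d$ fails. But in that case the two modes produce identical errors at \emph{every} point, so $E_{jk} = [N]$; however, such a pair is irrelevant for the enumeration argument in the sequel (one of the two copies of the mode can simply be discarded), and one may assume without loss of generality that all the $\g w_j$ are distinct. Under that convention, or after restricting to a minimal set of distinct parameter vectors, the argument above goes through verbatim. I would state this normalization explicitly before invoking the general-position assumption, since it is the one place where the counting could otherwise break.
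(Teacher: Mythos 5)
Your proof is essentially the same as the paper's: the same decomposition of $E$ over the $n(n-1)/2$ pairs of modes into the two tie conditions $(\g w_j-\g w_k)^T\g x_i=0$ and $(\g w_j+\g w_k)^T\g x_i=2y_i$, bounded respectively by $d$ points of $\QR^d$ and $d+1$ points of $\QR^{d+1}$ via Assumption~\ref{ass:generalposition}, then summed over pairs. Your explicit handling of the degenerate case $\g w_j=\g w_k$ is a detail the paper silently glosses over, and is a sensible addition rather than a divergence in approach.
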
 
\begin{proof}
Let us define, for all $(j,k)$ such that $1\leq j < k \leq n$, the sets $I_{jk} = \{i\in[N] : {\g w_j}^T \g x_i = {\g w_k}^T \g x_i \}$ and $M_{jk} = \{ i\in[N] : y_i-{\g w_j}^T \g x_i= - (y_i-{\g w_k}^T \g x_i) \}$. Then, we have $E = \bigcup_{1\leq j < k \leq n} I_{jk} \cup M_{jk}$. 
Since ${\g w_j}^T \g x_i = {\g w_k}^T \g x_i \Leftrightarrow  (\g w_j - \g w_k)^T \g x_i = 0$, all points $\g x_i$ with $i\in I_{jk}$ must lie on a hyperplane of $\QR^d$, and under Assumption~\ref{ass:generalposition} we have $|I_{jk}| \leq d$. Similarly, since $y_i-{\g w_j}^T \g x_i= - (y_i-{\g w_k}^T \g x_i) \Leftrightarrow  y_i-(\g w_j - \g w_k)^T \g x_i / 2 = 0$, all points $\g z_i= [\g x_i^T, y_i]^T$ with $i\in M_{jk}$ must lie on a hyperplane of $\QR^{d+1}$, and Assumption~\ref{ass:generalposition} implies that $|M_{jk}| \leq d+1$. Hence, $|E| \leq \sum_{1\leq j < k \leq n} |I_{jk}| + |M_{jk}| \leq \sum_{1\leq j < k \leq n} 2d+1 \leq (2d+1)n(n-1)/2$.
\end{proof}

\begin{proposition}\label{prop:equivclassif}
Given a set of parameter vectors $\{\g w_j \}_{j=1}^n\subset\QR^d$, let $E$ be defined as in~\eqref{eq:setE}. Then, for all $i\notin E$, the classification rule~\eqref{eq:classif} with a loss function satisfying~\eqref{eq:loss} is equivalent to the classification rule 
\begin{equation}\label{eq:majorityvote}
	q_i = \argmax_{j\in\Q}  \sum_{k=1}^{j-1}\I{c_{kj}( \g x_i, y_i) = -1}  + \sum_{k=j+1}^n \I{c_{jk}( \g x_i, y_i) = +1}
\end{equation}
implementing a majority vote over a set of $n(n-1)/2$ pairwise classifiers $\{c_{jk}\}_{1\leq j<k\leq n}$ of $\QR^{d+1}$, where each $c_{jk}$ is a product of binary linear classifiers defined as 
$$
	\forall (\g x, y)\in\QR^d\times\QR,\ c_{jk}( \g x, y) = g_{jk}(\g z) h_{jk}(\g x),\quad 1 \leq j < k \leq n,
$$
with linear classifiers respectively operating in $\QR^{d+1}$ and $\QR^d$ as
\begin{align*}
	\forall \g z \in \QR^{d+1},\ & g_{jk}(\g z) = \sign( [-\overline{\g w}_{jk}^T,\ 1]^T \g z) ,\quad 1 \leq j < k \leq n,\\
	\forall \g x \in \QR^d,\ & h_{jk}(\g x ) = \sign(\tilde{\g w}_{jk}^T \g x) ,\quad 1 \leq j < k \leq n,
\end{align*}
where $\overline{\g w}_{jk} = (\g w_j + \g w_k ) /2$ and $\tilde{\g w}_{jk} = \g w_j - \g w_k$. 
\end{proposition}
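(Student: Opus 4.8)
The plan is to reduce the comparison of losses in~\eqref{eq:classif} to a comparison of \emph{squared} errors, which factorizes exactly into the two linear forms appearing in the statement. Fix a point index $i$ and, for $j\in\Q$, abbreviate $e_j = y_i - \g w_j^T\g x_i$. By the last property in~\eqref{eq:loss}, rule~\eqref{eq:classif} is equivalent to $q_i \in \argmin_{j\in\Q} |e_j|$, i.e. it simply ranks modes by absolute error. For a pair $1\le j<k\le n$ I would then compute
$$
	e_j^2 - e_k^2 = (e_j-e_k)(e_j+e_k) = \big(-\tilde{\g w}_{jk}^T\g x_i\big)\,\big(2\,[-\overline{\g w}_{jk}^T,\,1]^T\g z_i\big),
$$
using $e_j - e_k = (\g w_k-\g w_j)^T\g x_i = -\tilde{\g w}_{jk}^T\g x_i$ and $e_j + e_k = 2y_i - (\g w_j+\g w_k)^T\g x_i = 2\,[-\overline{\g w}_{jk}^T,\,1]^T\g z_i$. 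Since $|e_k|+|e_j|\ge 0$, this gives $\sign(|e_k| - |e_j|) = \sign(e_k^2 - e_j^2) = \sign(\tilde{\g w}_{jk}^T\g x_i)\,\sign([-\overline{\g w}_{jk}^T,\,1]^T\g z_i) = h_{jk}(\g x_i)\,g_{jk}(\g z_i) = c_{jk}(\g x_i,y_i)$, so that $c_{jk}(\g x_i,y_i) = +1$ exactly when mode $j$ fits point $i$ strictly better than mode $k$, and $c_{jk}(\g x_i,y_i) = -1$ in the converse case.

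Next I would check that everything is well defined off $E$. From the two displayed identities, $i\in E$ iff there are $j\neq k$ with $e_j=e_k$ or $e_j=-e_k$, i.e. with $\tilde{\g w}_{jk}^T\g x_i = 0$ or $[-\overline{\g w}_{jk}^T,1]^T\g z_i = 0$. Hence for $i\notin E$ the arguments of all the signs are nonzero, each $c_{jk}(\g x_i,y_i)\in\{-1,+1\}$ is well defined, the absolute errors $|e_1|,\dots,|e_n|$ are pairwise distinct, and the $\argmin$ in~\eqref{eq:classif} reduces to a single mode, say $j^\star$.

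It then only remains to run the majority vote. Since $|e_{j^\star}| < |e_k|$ for every $k\neq j^\star$, the paragraph above yields $c_{j^\star k}(\g x_i,y_i) = +1$ for all $k>j^\star$ and $c_{k j^\star}(\g x_i,y_i) = -1$ for all $k<j^\star$, so the score of $j^\star$ in~\eqref{eq:majorityvote} is $(j^\star-1) + (n-j^\star) = n-1$, which is the maximal value the score can attain. Conversely, any $j\neq j^\star$ loses its pairwise comparison against $j^\star$, so at least one of its $n-1$ indicator terms vanishes and its score is at most $n-2$. Thus $j^\star$ is the unique $\argmax$ in~\eqref{eq:majorityvote}, proving the equivalence with~\eqref{eq:classif} for every $i\notin E$.

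The whole argument hinges on the single factorization in the first display; there is no real analytic obstacle. What needs care is bookkeeping: consistently reading $c_{jk}=+1$ as ``mode $j$ beats mode $k$'' and matching this convention with the two sums of~\eqref{eq:majorityvote}, which treat $k<j$ and $k>j$ differently, and verifying that the exceptional ties are \emph{exactly} the indices collected in $E$, so that off $E$ both rules are single-valued and can be compared directly.
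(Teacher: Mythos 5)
Your proof is correct and takes essentially the same route as the paper's: your difference-of-squares factorization $e_j^2-e_k^2=(e_j-e_k)(e_j+e_k)$ is just a repackaging of the paper's key identity $|y-a|<|y-b|\Leftrightarrow \left(y-(a+b)/2\right)(a-b)>0$, and the remainder (off $E$ all signs are nonzero and the argmin is a singleton, the winning mode scores $n-1$ in the majority vote while every other mode loses at least its comparison with the winner) mirrors the paper's counting argument, with only a cosmetic difference in how uniqueness of the argmax is concluded.
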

\begin{proof}
Using the properties of the loss function~\eqref{eq:loss}, the classification rule \eqref{eq:classif} can be rewritten for any 
$i\notin E$ as
\begin{align}
 	  & q_i = \argmin_{k\in\Q} \ell(y_i - \g w_k^T \g x_i ) \nonumber \\ 
 	 \Leftrightarrow  \quad &  \forall k\in\Q\setminus\{q_i\},\  |y_i - \g w_{q_i}^T \g x_i | < |y_i - \g w_k^T \g x_i | .\label{eq:eq1}
\end{align}
For any triplet $(a,b,y)\in \QR^3$, we have $|y-a| < |y-b|$ if and only if $(a < b \wedge y< (a+b)/2 )$ or $(a> b \wedge y> (a+b)/2)$, i.e.,  $|y-a| < |y-b| \Leftrightarrow (y-(a+b)/2)(a-b)> 0$. Thus, with the notations of Proposition~\ref{prop:equivclassif}, \eqref{eq:eq1} is equivalent to
\begin{align}
   &\quad \forall k\in\Q\setminus\{q_i\},\ (y - \overline{\g w}_{q_ik}^T \g x_i)  \tilde{\g w}_{q_ik}^T \g x_i  > 0	\nonumber\\
  \Leftrightarrow & \quad \forall k\in\Q\setminus\{q_i\},\  g_{q_ik}(\g z_i)h_{q_ik}(\g x_i ) = +1 \nonumber\\
	\Leftrightarrow & \quad \sum_{k\in\Q\setminus\{q_i\}} \I{c_{q_ik}( \g x_i, y_i) = +1} = n-1 .\label{eq:eq2}
\end{align}
In addition, for all $(j,k) \in \{1,\dots,n\}^2$, $j\neq k$, we have $\overline{\g w}_{jk}=\overline{\g w}_{kj}$ and $\tilde{\g w}_{jk} = -\tilde{\g w}_{kj}$, so that $c_{jk}(\g x, y) = - g_{kj}(\g z ) h_{kj}(\g x) = -c_{kj}(\g x, y)$. Thus, the classification is entirely determined by a set of $n(n-1)/2$ pairs of linear classifiers $g_{jk}$ and $h_{jk}$, $1\leq j < k \leq n$, and~\eqref{eq:eq2} is equivalent to
\begin{align*} 
  	 S(q_i) \triangleq \sum_{k=1}^{q_i-1}\I{c_{kq_i}( \g x_i, y_i) = -1}  + \sum_{k=q_i+1}^n \I{c_{q_i k}( \g x_i, y_i) = +1} &= n-1 .
\end{align*}
Given that $\max_{j\in\Q} S(j) \leq n-1$, 
we obtain that $q_i \in \argmax_{j\in\Q} S(j)$. In addition, for all $q\neq q_i$, $q\in\argmax_{j\in\Q} S(j)$ implies $S(q) = S(q_i) = n-1$, which implies by working the equivalences above backward that $q\in\argmin_{k\in\Q} \ell(y_i - \g w_k^T \g x_i )$. However, this is not possible, since for all $i\notin E$, $\argmin_{k\in\Q} \ell(y_i - \g w_k^T \g x_i )$ is a singleton. 
Thus, $q_i$ is the only element in $\argmax_{j\in\Q} S(j)$ as claimed in~\eqref{eq:majorityvote}. 
\end{proof}

Thanks to Proposition~\ref{prop:equivclassif}, a bound on the number of classifications of $N$ points by~\eqref{eq:classif} can be formed from the product of the number of classifications of the $\g z_i$'s and of the $\g x_i$'s.
\begin{theorem}\label{thm:minclassif}
Let us define the set of minimum-error classifications of a data set $S=\{(\g x_i, y_i)\}_{i=1}^N\subset\QR^d\times \QR$ as
\begin{align*}
	\Q_S = \left\{ \g q \in \Q^N :\ q_i = \argmin_{j\in\Q} \ell(y_i - \g w_j^T \g x_i ),\ i\in[N], \g w_j \in \QR^d \right\} .
\end{align*}
Then, under Assumption~\ref{ass:generalposition}, 
$$
	\Pi(N) \triangleq \sup_{S\in (\QR^d\times\QR)^N} |\Q_S| = \O(N^{2dn(n-1)})
$$
and $\Q_S$ can be computed in $\O(N^{2dn(n-1)})$ time.
\end{theorem}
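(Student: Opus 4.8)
The plan is to exploit Proposition~\ref{prop:equivclassif} to reduce the counting of minimum-error classifications to the counting of linear classifications of the points $\g z_i$ in $\QR^{d+1}$ and of the points $\g x_i$ in $\QR^d$, invoking Proposition~\ref{prop:enum} in each space. First I would fix a data set $S$ and an arbitrary choice of parameters $\{\g w_j\}_{j=1}^n$, which determines a classification $\g q$ through~\eqref{eq:classif}. By Proposition~\ref{prop:equivclassif}, for every index $i \notin E$ the label $q_i$ is a deterministic function of the $n(n-1)/2$ pairwise classifier values $c_{jk}(\g x_i, y_i) = g_{jk}(\g z_i)\, h_{jk}(\g x_i)$, and each $g_{jk}$ (resp.\ $h_{jk}$) is a binary linear classifier in $\QR^{d+1}$ (resp.\ $\QR^d$). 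Hence the restriction of $\g q$ to $[N]\setminus E$ is entirely determined by the tuple of $n(n-1)/2$ linear classifications $(g_{jk})$ of the $\g z_i$'s together with the $n(n-1)/2$ linear classifications $(h_{jk})$ of the $\g x_i$'s.

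Next I would count these tuples. By Proposition~\ref{prop:enum}, the number of distinct binary linear classifications of the $N$ points $\{\g z_i\}$ in $\QR^{d+1}$ is at most $2^{d+1}\binom{N}{d} = \O(N^d)$, and likewise the number of distinct linear classifications of $\{\g x_i\}$ in $\QR^d$ is at most $2^{d}\binom{N}{d-1} = \O(N^{d-1}) = \O(N^d)$. Since $\g q|_{[N]\setminus E}$ is a function of a tuple of $n(n-1)/2$ classifications in $\QR^{d+1}$ and $n(n-1)/2$ in $\QR^d$, the number of possible restrictions $\g q|_{[N]\setminus E}$ is at most $\O(N^d)^{n(n-1)/2} \cdot \O(N^d)^{n(n-1)/2} = \O(N^{d n(n-1)})$. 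It then remains to account for the indices in $E$: each such index can in principle carry any of the $n$ labels, but by Lemma~\ref{lem:setE} there are at most $(2d+1)n(n-1)/2$ of them, a number independent of $N$, so they multiply the count by at most the constant $n^{(2d+1)n(n-1)/2}$. Combining, $|\Q_S| = \O(N^{d n(n-1)})$, which is subsumed by the stated $\O(N^{2dn(n-1)})$ bound; taking the supremum over $S$ gives the bound on $\Pi(N)$.

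For the algorithmic claim I would describe the natural enumeration procedure: use the constructive part of Proposition~\ref{prop:enum} to build, in $\O(N^d)$ iterations each, the set $\H_{\{\g z_i\}}$ of linear classifications in $\QR^{d+1}$ and the set $\H_{\{\g x_i\}}$ in $\QR^d$; then iterate over all $\O(N^{dn(n-1)})$ tuples of $n(n-1)/2$ elements from each set, and for each tuple form the candidate labels on $[N]\setminus E$ via the majority-vote rule~\eqref{eq:majorityvote}, finally branching over all $n^{|E|}$ label assignments on the (constant-size) set $E$. Each candidate classification is produced in time polynomial in $N$, so the whole enumeration runs in $\O(N^{2dn(n-1)})$ time. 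A final remark would note that since the parameters $\{\g w_j\}$ realizing $\g q$ are not known a priori, one does not know $E$ in advance; this is handled by branching over all subsets of size at most $(2d+1)n(n-1)/2$, again only a polynomial-in-$N$ overhead.

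The main obstacle I expect is the bookkeeping around the set $E$: the equivalence of Proposition~\ref{prop:equivclassif} holds only off $E$, yet $E$ itself depends on the unknown minimizing parameters, so the argument must be careful to enumerate a superset of $\Q_S$ (over all admissible $E$ and all label choices on $E$) and argue that this superset still has polynomial size thanks to Lemma~\ref{lem:setE}. A secondary subtlety is making sure that distinct parameter choices that induce the same tuple of linear classifications indeed induce the same labels off $E$ — which is exactly the content of Proposition~\ref{prop:equivclassif} — so that the count is genuinely governed by the number of classification tuples and not by the continuum of parameter vectors.
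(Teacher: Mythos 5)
Your route is the same as the paper's: combine Proposition~\ref{prop:equivclassif} with Proposition~\ref{prop:enum} applied once in $\QR^{d+1}$ (for the $g_{jk}$) and once in $\QR^d$ (for the $h_{jk}$), and control the exceptional set $E$ via Lemma~\ref{lem:setE}. The genuine gap is in how you account for $E$ in the cardinality bound. You claim the indices of $E$ ``multiply the count by at most the constant $n^{(2d+1)n(n-1)/2}$'' and conclude $|\Q_S| = \O(N^{dn(n-1)})$. But $E$ is determined by the (unknown) parameter vectors, not by the tuple of linear classifications: two parameter sets inducing the same sign patterns can have different exceptional sets, and for a given $\g q\in\Q_S$ the tuple only pins down the labels outside that $\g q$'s own $E$. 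Hence, for a fixed tuple, the number of consistent $\g q$'s is bounded not by $n^L$ but by (number of possible sets $E$ with $|E|\leq L$) times $n^L$, i.e.\ by $\O(N^L)\, n^L$ with $L=(2d+1)n(n-1)/2$. You acknowledge exactly this in your algorithmic paragraph (``branching over all subsets of size at most $(2d+1)n(n-1)/2$''), but you never fold the corresponding $\O(N^L)$ factor back into the counting; the paper does, by summing $\binom{N}{l} n^l$ over $l\leq L$.

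Once that factor is restored, a second, arithmetic issue appears: having rounded both per-classifier counts up to $\O(N^d)$, your tuple count is $\O(N^{dn(n-1)})$, and multiplying by $\O(N^L)$ yields $\O(N^{(4d+1)n(n-1)/2})$, which overshoots the stated $\O(N^{2dn(n-1)})$. To land on the theorem's exponent you must keep the sharper bounds you already wrote down, namely $2^{d+1}\binom{N}{d}=\O(N^{d})$ for each $g_{jk}$ in $\QR^{d+1}$ and $2^{d}\binom{N}{d-1}=\O(N^{d-1})$ for each $h_{jk}$ in $\QR^{d}$, so that each tuple contributes $\O(N^{(2d-1)n(n-1)/2})$; combined with the $\O(N^{L})$ choices of $E$ and the constant $n^{L}$ label alterations, the exponents add to $(2d-1)n(n-1)/2+(2d+1)n(n-1)/2 = 2dn(n-1)$, which is exactly how the paper's proof proceeds (and the same bookkeeping fixes your enumeration algorithm's complexity claim).
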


\begin{proof}
By Proposition 3, any $\g q\in\Q_S$ must result from a set of $n(n-1)/2$ products of linear classifications, possibly altered for all $i\in E$. Assuming $|E|=l$, $n^l$ such altered classifications can be generated from a ``base classification" given by \eqref{eq:majorityvote}. The number of base classifications returned by \eqref{eq:majorityvote} is bounded by the number of classifications generated by $n(n-1)/2$ pairs of linear classifiers $(g_{jk}, h_{jk})$, $1\leq j<k\leq n$, on $N-l$ points. 
Applying Proposition~\ref{prop:enum} twice, once in dimension $d+1$ to bound the number of classifications returned by a $g_{jk}$ and once in dimension $d$ to bound the number of classifications returned by a $h_{jk}$, we can upper bound the number of classifications returned by a product classifier $c_{jk}$ by
$
	2^{2d+1}\binom{N-l}{d} \binom{N-l}{d-1} = %\O((N-l)^{2d-1}) =
	 \O(N^{2d-1})
$ 
and obtain an algorithm of a similar time complexity to compute these base classifications. 
With $p=(2d-1)n(n-1)/2$, this leads to  
$ n^l\O(N^p)$ classifications for each possible set $E$. 
Given an upper bound $L$ on $|E|$, summing over all $E$ of cardinality $l\leq L$ yields
\begin{align*}
	\Pi(N) &\leq  \sum_{l=0}^L \begin{pmatrix}N\\l\end{pmatrix} n^l \O(N^{p}) 
	 \leq \sum_{l=0}^L  \O(N^l) n^l \O(N^p) \\
	&< L  \O(N^L) n^L \O(N^p) = \O(N^{L+p}).
\end{align*}
Combining this with the value of the bound $L$ given by Lemma~\ref{lem:setE} yields the desired result.
\end{proof}

Theorem~\ref{thm:minclassif} implies the following for switching regression. 

\begin{corollary}\label{col:exact} 
Under Assumptions~\ref{ass:generalposition}--\ref{ass:subpb}, there is an algorithm that exactly solves Problem~\ref{pb:min} in $T(N)\O(N^{2dn(n-1)})$ time. 
\end{corollary}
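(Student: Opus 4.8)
The plan is to combine Theorem~\ref{thm:minclassif} with Assumption~\ref{ass:subpb} in the most direct way: enumerate the polynomially many candidate classifications, re-optimize the parameter vectors for each one as decoupled regressions, and keep the best pair found. Concretely, the algorithm first calls the procedure of Theorem~\ref{thm:minclassif} to build the set $\Q_S$ of minimum-error classifications in $\O(N^{2dn(n-1)})$ time. Then, for each $\g q\in\Q_S$, it solves the $n$ independent subproblems~\eqref{eq:wj}; since the $j$th such subproblem involves $|\{k:q_k=j\}|\leq N$ points, Assumption~\ref{ass:subpb} bounds its cost by $T(N)$ (taking $T$ nondecreasing, which is without loss for a polynomial complexity bound), so the cost per classification is $n\,T(N)=\O(T(N))$ for fixed $n$. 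Finally it outputs the pair $(\g q,\{\g w_j\}_{j=1}^n)$ attaining the smallest value of $\frac1N\sum_i\ell(y_i-\g w_{q_i}^T\g x_i)$ among those examined. The total running time is $|\Q_S|\cdot\O(T(N))=T(N)\,\O(N^{2dn(n-1)})$ by the cardinality bound $\Pi(N)$ of Theorem~\ref{thm:minclassif}.

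It then remains to verify correctness, i.e., that the minimum of the objective over $\g q\in\Q_S$ after re-optimizing the $\g w_j$'s equals the global optimum of Problem~\ref{pb:min}. Fix any global minimizer $(\{\g w_j^\star\}_{j=1}^n,\g q^\star)$ and let $\g q$ be a classification obtained from $\{\g w_j^\star\}$ via the optimal assignment rule~\eqref{eq:classif}, breaking ties arbitrarily. By construction $\g q\in\Q_S$, and since each point is assigned to a model of least loss, $\sum_i\ell(y_i-(\g w_{q_i}^\star)^T\g x_i)\leq\sum_i\ell(y_i-(\g w_{q_i^\star}^\star)^T\g x_i)$; the right-hand side being the global optimum, equality holds. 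When the algorithm processes this $\g q$, solving~\eqref{eq:wj} can only decrease the objective relative to the feasible choice $\{\g w_j^\star\}$, so it returns a solution of value at most the global optimum — hence exactly the global optimum, since the latter is a lower bound on every feasible value. Therefore the best pair found is a global solution.

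The only point requiring care is the completeness of the enumeration: the argument above needs Theorem~\ref{thm:minclassif} not merely for the count $\Pi(N)=\O(N^{2dn(n-1)})$ but for the stronger assertion that the claimed algorithm outputs the entire set $\Q_S$, so that the optimal classification $\g q$ constructed above is guaranteed to be among the candidates tried. That assertion itself rests on Proposition~\ref{prop:equivclassif} (which reduces~\eqref{eq:classif}, off the exceptional set $E$, to a fixed majority vote over products of linear classifiers), on the constructive enumeration of linear classifications in Proposition~\ref{prop:enum}, and on Lemma~\ref{lem:setE} bounding $|E|$ and hence the number $n^{|E|}$ of local perturbations — all already established. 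Given these ingredients, the proof of the corollary is essentially bookkeeping, the genuine difficulty having been front-loaded into Theorem~\ref{thm:minclassif}.
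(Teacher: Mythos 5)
Your proposal is correct and follows essentially the same route as the paper: enumerate the set $\Q_S$ via Theorem~\ref{thm:minclassif}, solve the $n$ decoupled subproblems~\eqref{eq:wj} at cost $nT(N)$ for each candidate classification, and keep the best pair, for a total of $T(N)\O(N^{2dn(n-1)})$. You merely spell out in more detail the optimality argument (that reclassifying a global minimizer via~\eqref{eq:classif} lands in $\Q_S$ and that re-optimization cannot increase the objective), which the paper leaves implicit.
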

\begin{proof}
By Theorem~\ref{thm:minclassif}, an algorithm can generate all classifications $\g q\in \Q_S$ that are consistent with~\eqref{eq:classif} and thus visit the optimal $\g q$ for Problem~\ref{pb:min} in $\O(N^{2dn(n-1)})$ iterations. Following the discussion of Sect.~\ref{sec:intro}, the algorithm can compute the optimal $\g w_j$'s for any $\g q$ by solving~\eqref{eq:wj}, and thus find the global solution to Problem~\ref{pb:min}. 
Under Assumption~\ref{ass:subpb}, the cost of solving the $n$ subproblems in~\eqref{eq:wj} at each iteration is $nT(N)$, leading to the claimed time complexity for a fixed $n$.  
\end{proof}

\begin{remark}
The algorithm of Corollary~\ref{col:exact} has a polynomial complexity in $N$ but an exponential one in $d$. Therefore, by using the reduction from Problem~\ref{pb:partition} to Problem~\ref{pb:decision} of the proof of Proposition~\ref{prop:npcomplete}, its complexity remains exponential in the size of Problem~\ref{pb:partition}, which is in line with the conjecture P$\neq$NP. 
\end{remark}
\begin{remark}
Assumption~\ref{ass:generalposition} clearly cannot hold without noise, since, in this case, the points $\g z_i$ precisely belong to the union of $n$ hyperplanes. 
However, in the noiseless case, an algorithm that runs in time polynomial in $N$ can be devised from the fact that each parameter vector $\g w_j$ can be determined from a subset of $d$ points from the same mode. Assuming that such a subset exists for all modes, it suffices to find these subsets.
A straightforward strategy is to consider all subsets of $d$ data points among $N$ for the first mode, $d$ points among $N-d$ for the second mode and so on. The number of collections of such disjoint subsets is 
$
	\prod_{k=0}^{n-1} \binom{N - kd}{d} < \binom{N}{d}^n < \left(\frac{N}{d!}\right)^{nd} = \O(N^{nd})
$ 
and thus polynomial in $N$. Since testing one of these collections amounts to solving $n$ linear systems in constant time $\O(nd^3)$, the resulting algorithm runs in time polynomial in~$N$.
\end{remark}

%%%%%%%%%%%%%%%%%%%%%%%%%%%%%%%%%%%%%%%%%%%%%%
\section{Conclusions}
\label{sec:discuss} 
%%%%%%%%%%%%%%%%%%%%%%%%%%%%%%%%%%%%%%%%%%%%%%

The paper showed that globally minimizing the error of a switching linear regression model is NP-hard, but also that, for fixed data dimension and number of modes, an exact algorithm with polynomial complexity in the number of data exists. This algorithm has an exponential complexity wrt. the data dimension and the number of modes, which strongly limits its practical applicability. Yet, the existence of an exact algorithm with polynomial complexity in the dimension is unlikely given the NP-hardness of the problem and the fact that it holds also with a fixed number of modes $n=2$. Instead, future work will focus on polynomial-time approximation schemes.

\end{document}